\PassOptionsToPackage{sort}{natbib}
\documentclass{article}
\usepackage[preprint]{neurips_2026}
\setcitestyle{square,semicolon}  % keep the square brackets, separate entries with ;

\usepackage[utf8]{inputenc}
\usepackage[T1]{fontenc}
\usepackage{amsmath, amssymb, amsthm}
\usepackage{amsfonts}
\usepackage{bm}
\usepackage{graphicx}
\usepackage{booktabs}
\usepackage{capt-of}
\usepackage{nicefrac}
\usepackage{microtype}
\usepackage{url}
\usepackage{array}
\usepackage{ragged2e}
\usepackage{tabularx}
\usepackage{xcolor}
\usepackage[framemethod=default]{mdframed}
\usepackage{hyperref}
\hypersetup{hidelinks}

\newcolumntype{Y}{>{\RaggedRight\arraybackslash}X}

\theoremstyle{plain}
\newtheorem{theorem}{Theorem}
\newtheorem{corollary}{Corollary}
\newtheorem{proposition}{Proposition}
\newtheorem{lemma}{Lemma}
\theoremstyle{definition}
\newtheorem{definition}{Definition}
\newtheorem{observation}{Observation}

\newcounter{contrib}
\newcommand{\ind}{\mathbf{1}}
\newcommand{\Refset}{\Theta_{\mathrm{ref}}}

\newmdenv[
  linewidth=0.6pt,
  linecolor=black!55,
  backgroundcolor=black!3,
  innertopmargin=5pt,
  innerbottommargin=5pt,
  innerleftmargin=7pt,
  innerrightmargin=7pt,
  skipabove=6pt,
  skipbelow=4pt
]{claimbox}

\title{Binarization Flattens the Score Space}

\author{Jacob Cole\thanks{Work conducted independently of the author's research at UC Berkeley.} \\
  UC Berkeley \\ \href{mailto:jakecole@berkeley.edu}{jakecole@berkeley.edu}}

\begin{document}
\maketitle

\begin{abstract}
Large language model (LLM) judges are often used as rewards to train policies on objectives that
deterministic verifiers cannot capture. However, these rewards are often collapsed to pass/fail (\{0, 1\}), which reports the verdict but not how well a response met each criterion. We model each pass/fail verdict as a score on an unreported scale, compared with one cutoff. A \emph{stretch} of that scale moves every score proportionally toward or
away from the cutoff, but never across it, so no verdict changes. A policy is
therefore free to apply any stretch without changing anything the panel reports. Under a joint-Gaussian model, a third grade adds a second
threshold and removes this affine stretch ambiguity. On MATH and SciBench
outputs from one seven-criterion judge, all 14 constructed criterionwise
stretches were invisible after binarization but visible with three grades. At \(n=1{,}024\), a test given both population laws had at least \(96.5\%\) power at a \(1.5\times\) stress. Retaining grades closes one blind spot created by binarization, but verdicts alone remain insufficient as some changes are still indistinguishable from genuine improvement. These include arbitrary within-grade changes and fixed-covariance, loading-aligned mean shifts---the signature of a sycophancy-shaped lift the panel reads as competence. The shared-factor reference approximation fit MATH and SciBench but not HealthBench, delineating its empirical scope. We recommend keeping at least three grades (for example, asking the judge whether each criterion is fully, partially, or not met and rewarding \{0, 0.5, 1\}), and externally validating gains along the remaining direction, which no finer scale removes.
\end{abstract}

% =====================================================================
\section{Introduction}
\label{sec:intro}

LLM-based evaluators are increasingly used as post-training rewards because
they can represent objectives and user preferences that are difficult to
express with programmatic verifiers. When policies optimize against these
rewards during post-training, judge errors can
propagate into model weights and downstream behavior. When the judge's output channel
supplies the reward, optimization can select for policy changes along directions
the channel cannot distinguish. Therefore, validity is not only a property of the judge, but also of its deployment system. Binary rubric rewards are
already used in judge-guided RL. For example, Compute as Teacher scores each criterion
yes/no and rewards the fraction satisfied \citep{jayalath2025compute}. Prior
work shows that
these rewards can rise while intended behavior
stagnates or degrades, including in controlled demonstrations of reward hacking
\citep{amodei2016concrete,skalse2022defining,gao2022scaling,karwowski2023goodhart,wang2026reproducinganalyzingdetectingreward,mahmoud2026rewardhackingrubric}.
Related work studies rubric reward design and mitigation \citep{gunjal2025rubrics,yang2026rubricdropoutsimpleway}, practical judge failures such as position bias \citep{zheng2023judging,shi2025judgingjudges}, and correlated errors in heterogeneous panels \citep{verga2024replacing,kohli2026judgeseffectivevotescorrelated}. We frame this as a construct-validity question: \emph{what changes can the evaluator's output channel identify at all, and which remain indistinguishable from genuine improvement?}

We study the strongest verdict-only evidence: the \emph{joint verdict
distribution} over all criterion grades, including margins and cross-criterion
agreement. Policies with the same joint verdict distribution are
indistinguishable without held-out gold or other external evidence.

\begin{figure}[t]
    \centering
    \includegraphics[width=0.98\linewidth]{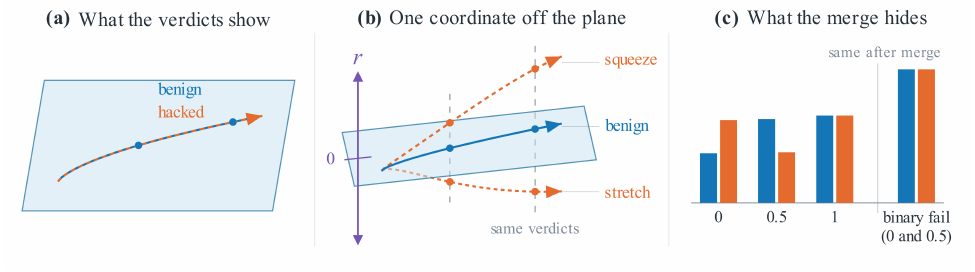}
    \caption{\textbf{Why a third grade helps.}
    \textbf{(a)} The space of binary verdict evidence, seen face-on. A benign run
    and a hacked run (one that only rescales scores around the pass cutoff) trace
    one path: every verdict agrees at every step.
    \textbf{(b)} The same runs against one statistic the panel never reports,
    \(r\). The benign run stays at \(r=0\); the hacked runs ride above and below,
    one squeezing scores toward the cutoff and one stretching them away, at
    identical verdicts throughout. Any statistic that moves with the rescaling
    would do, and the middle grade is one of them.
    \textbf{(c)} With three grades the two runs differ, but merging 0 and 0.5
    into a single fail recovers identical binary evidence. Under Gaussian score
    laws, holding both thresholds fixed forces the rescaling to be the identity;
    within-grade changes can still hide.}
    \label{fig:grading-idea}
\end{figure}

\begin{samepage}
Figure~\ref{fig:grading-idea} illustrates why retaining a middle grade matters. Each criterion is scored as 0, 0.5, or 1; binarization treats both 0 and 0.5 as failure and only 1 as passing. Rescaling scores around the pass
cutpoint (a \emph{stretch}), or moving an output from \(0\) to \(0.5\), can therefore change graded scores without changing any binary
verdict, even jointly across criteria. Retaining \(0.5\) exposes this motion, and
under Gaussian score laws two fixed thresholds eliminate the stretch
(Theorem~\ref{thm:fiber}); three grades is a floor, but not a target. The loading-aligned mean shift is an equality of the score laws themselves, so it stays invisible at every scoring resolution, up to and including a continuous score. Within-grade changes persist as long as any quantization does. Neither is removed by a finer scale, so ruling them out requires evidence from outside the verdict channel.

\paragraph{Contributions.}
\begin{list}{(\arabic{contrib})}{\usecounter{contrib}
  \setlength{\leftmargin}{1.9em}
  \setlength{\labelwidth}{1.4em}\setlength{\labelsep}{0.5em}
  \setlength{\itemsep}{0pt}\setlength{\parskip}{0pt}
  \setlength{\parsep}{0pt}\setlength{\topsep}{0pt}}
  \item We identify latent score changes that leave the full joint binary
  verdict distribution unchanged.
  \item We prove a minimal Gaussian-model fix: retain one middle grade and its
  second threshold.
  \item We verify the fix with MATH and SciBench ablations and characterize
  residual changes requiring external validation. All proofs appear in
  Appendix~\ref{app:proofs}.
\end{list}
\end{samepage}

% =====================================================================
\section{Observation model and identification question}
\label{sec:model}

We call each \((\text{judge},\text{criterion})\) output channel a
\emph{panel member}. The theory permits members from one or many judges, while the
experiments instantiate multiple criteria from one fixed judge for consistency. For output
\(i\) and member \(j\), let the latent judge score be
\[
 S_{ij}=\lambda_j\theta_i+\varepsilon_{ij},\qquad
 \theta_i=m+\sqrt c\,Z_i,
\]
where \(\lambda_j\) is the member loading and
\(Z_i,\varepsilon_{i1},\ldots,\varepsilon_{iJ}\) are independent centered
Gaussians with variances \(1,\psi_1,\ldots,\psi_J\). Hence
\(\Sigma(c)=\operatorname{diag}(\psi)+c\lambda\lambda^\top\). At \(c=1\),
calibration sets \(\psi_j=1-\lambda_j^2\), standardizing scores only there. A binary member returns
\(Y_{ij}=\ind\{S_{ij}>\tau_j\}\). A graded member instead fixes
\(-\infty=\tau_{j,0}<\cdots<\tau_{j,K_j}=+\infty\) and reports the interval
containing \(S_{ij}\). These are measurement assumptions, not claims that the shared factor is gold
quality or the optimum. Section~\ref{sec:exp} also applies the
threshold-preserving transformation directly to realized scores.

We model ordinary (non-hacked) improvement as changes in the shared factor's mean \(m\) and
variance \(c\). Writing
\(\Refset=\{(m,c):m\in\mathbb R,c>0\}\), we call the corresponding
two-parameter surface of joint verdict distributions the \emph{reference orbit}.
This is deliberately narrow: genuine gains may be
criterion-specific and fall outside it.
Reference-arm gains on MATH and SciBench are better explained by this family
than by no motion, although the fit is approximate and fails on HealthBench.

\begin{definition}[Reference compatibility]
\label{def:detect}
A policy change is \emph{detectable from joint verdicts} when its joint verdict
distribution does not equal any distribution in the reference family. Under
the joint-Gaussian model, all cumulative margins and latent-score dependence
determine the joint distribution.
Outside the joint-Gaussian model, differences in those summaries remain
sufficient evidence of departure, but matching them need not establish equality
of the joint distributions. Any change in a grade margin or pairwise grade
table is therefore detectable from verdicts.
\end{definition}

The policy may move latent judge scores but cannot change the declared panel
parameters, cutpoints, or residual process. This fixed-instrument assumption is
what makes the blind set auditable before deployment.

% =====================================================================
\section{Main result: what grading fixes}
\label{sec:blind}

\subsection{One threshold leaves the score scale unidentified}

\begin{observation}[Threshold-preserving transformations are invisible]
\label{obs:threshold-preserving}
Let each criterion assign grades by comparing a latent score with fixed
thresholds. Any strictly increasing coordinatewise transformation that keeps those thresholds fixed leaves every grade (and thus the joint verdict distribution) unchanged for every sampled output under a shared coupling. This means binary grading pins the transformation at
one point while three grades pin it at two. This family is sufficient, not
exhaustive; invisible within-cell rearrangements need not be monotone.
\end{observation}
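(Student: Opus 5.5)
The claim is pathwise, so I plan to work output by output and criterion by criterion, and only at the end pass from realized grades to distributions. Fix criterion \(j\) with cutpoints \(-\infty=\tau_{j,0}<\tau_{j,1}<\cdots<\tau_{j,K_j}=+\infty\). Let \(g_j:\mathbb R\to\mathbb R\) be strictly increasing with \(g_j(\tau_{j,k})=\tau_{j,k}\) for every finite cutpoint. The transformed policy is coupled to the original one by setting \(S'_{ij}=g_j(S_{ij})\) on the same draw of outputs; this is the ``shared coupling'' in the statement.

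\emph{Step 1: grades are preserved pointwise.} Suppose \(\tau_{j,k-1}<S_{ij}\le\tau_{j,k}\). Because \(g_j\) is strictly increasing and fixes both endpoints, \(\tau_{j,k-1}=g_j(\tau_{j,k-1})<g_j(S_{ij})\le g_j(\tau_{j,k})=\tau_{j,k}\). The infinite endpoints need no fixing: the inequality against \(\pm\infty\) is trivial. The strict and non-strict inequalities are carried over exactly because \(g_j\) is strictly increasing, and this also covers the boundary convention \(\ind\{S_{ij}>\tau_j\}\) in the binary case. So \(S'_{ij}\) lies in the same interval as \(S_{ij}\), and the reported grade is identical for every \(i\) and \(j\).

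\emph{Step 2: lift to the joint distribution.} The vector of reported grades for output \(i\) is a deterministic function of \((S_{i1},\dots,S_{iJ})\). Under the coupling, this vector is almost surely equal to the vector computed from \((S'_{i1},\dots,S'_{iJ})\). Almost-sure equality of random vectors implies equality in law, so the joint verdict distribution, together with every margin and pairwise grade table in Definition~\ref{def:detect}, is unchanged. This holds for any dependence structure among the scores, Gaussian or not. The argument also never uses continuity of \(g_j\); strict monotonicity and fixed points at the cutpoints are all that is needed.

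\emph{Step 3: the remarks.} For ``pins the transformation at one point versus two'': with \(K_j=2\), the only finite constraint is \(g_j(\tau_j)=\tau_j\). The stretch \(g_j(s)=\tau_j+a(s-\tau_j)\) with \(a>0\) satisfies it for every \(a\), giving a one-parameter affine family. A middle grade adds a second fixed point, and an affine map fixing two distinct points is the identity. For ``sufficient, not exhaustive'': any measurable map sending each cell into itself also preserves grades, for example a non-monotone permutation of scores within one cell. The only step needing care is stating the coupling correctly, so that ``unchanged for every sampled output'' is read pathwise rather than only in distribution. Beyond that the argument is direct.
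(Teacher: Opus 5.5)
Your argument is correct and is essentially the paper's: the paper gives no separate proof of this observation, but the converse direction of the proof of Theorem~\ref{thm:fiber} uses your Steps~1--2 in the affine case. There it couples \(\widetilde S=\bar\tau+D(S-\bar\tau)\), notes that positive scaling about a threshold preserves each label, and concludes that the verdict vector is preserved pathwise and hence in law. Your version runs the same pathwise argument for a general strictly increasing, threshold-fixing \(g_j\), and you correctly use strict monotonicity to respect the \((\tau_{j,k-1},\tau_{j,k}]\) cell convention.
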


This observation is distribution-free. Under Gaussian pre- and post-change
score laws, equivalence is affine, so two fixed thresholds force the identity
(Theorem~\ref{thm:fiber}); the one-factor model is not required.

\begin{theorem}[Gaussian equivalence of joint verdict distributions]
\label{thm:fiber}
Let \(S\sim N_J(\mu,\Sigma)\) and \(S^\dagger\sim
N_J(\mu^\dagger,\Sigma^\dagger)\) have positive definite covariances, the same
fixed finite cutpoints, and fixed category labels.
\textbf{(i)} If every member is binary with threshold vector \(\tau\), the
joint verdict distributions agree if and only if there exists a positive diagonal
\(D\) with
\[
\mu^\dagger-\tau=D(\mu-\tau),\qquad
\Sigma^\dagger=D\Sigma D.
\]
Moreover, \(D\) is unique.

\textbf{(ii)} With graded members, the same relations hold about each member's
first finite cutpoint, but necessarily \(D_{jj}=1\) whenever \(K_j\ge3\).
Conversely, these restricted relations imply equality of the joint verdict
distributions. \textnormal{(\hyperref[app:proof-fiber]{Proof: App.~\ref*{app:proof-fiber}.})}
\end{theorem}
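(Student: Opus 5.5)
We prove the ``if'' directions by an explicit coupling and the ``only if'' directions by reading off univariate and bivariate Gaussian orthant probabilities.

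\emph{Sufficiency.} Suppose \(\mu^\dagger-\tau=D(\mu-\tau)\) and \(\Sigma^\dagger=D\Sigma D\) with \(D\) positive diagonal. Set \(S^\dagger:=\tau+D(S-\tau)\). This map is affine in \(S\), so \(S^\dagger\) has law \(N_J(\mu^\dagger,\Sigma^\dagger)\). Coordinatewise it is strictly increasing and fixes \(\tau_j\), so Observation~\ref{obs:threshold-preserving} gives identical binary verdicts for every sample. The joint verdict laws therefore agree.

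In (ii), members with \(K_j\ge3\) have \(D_{jj}=1\), so \(S^\dagger_j=S_j\) and every cutpoint is preserved. Members with two categories have a single finite cutpoint, and the stretch fixes it. The same coupling argument then applies.

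\emph{Necessity, marginals.} Fix member \(j\) and write \(\sigma_j=\Sigma_{jj}^{1/2}\). Equality of the verdict law forces equality of its marginal \(P(S_j\le t)=\Phi((t-\mu_j)/\sigma_j)\) at each finite cutpoint \(t\). At the first cutpoint this gives
\[
\frac{\tau_j-\mu_j}{\sigma_j}=\frac{\tau_j-\mu^\dagger_j}{\sigma^\dagger_j},
\]
by injectivity of \(\Phi\). Define \(D_{jj}=\sigma^\dagger_j/\sigma_j>0\). Then \(\mu^\dagger_j-\tau_j=D_{jj}(\mu_j-\tau_j)\).

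If \(K_j\ge3\), there is a second cutpoint \(t'\ne\tau_j\). The two equations \((t-\mu_j)/\sigma_j=(t-\mu^\dagger_j)/\sigma^\dagger_j\) for \(t\in\{\tau_j,t'\}\) say that the affine map \(t\mapsto \mu^\dagger_j+\sigma^\dagger_j(t-\mu_j)/\sigma_j\) fixes two distinct points. It is therefore the identity, which gives \(\sigma^\dagger_j=\sigma_j\) and \(\mu^\dagger_j=\mu_j\), i.e.\ \(D_{jj}=1\).

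\emph{Necessity, covariances.} For \(j\ne k\), the bivariate probability \(P(S_j\le\tau_j,S_k\le\tau_k)\) is a function of the correlation \(\rho_{jk}\) and the two standardized thresholds \(a_j,a_k\). We have already shown the standardized thresholds coincide across the two laws. For fixed \(a_j,a_k\), this probability is strictly increasing in \(\rho\in(-1,1)\), by Plackett's identity \(\partial\Phi_2/\partial\rho=\phi_2(a_j,a_k;\rho)>0\). Positive definiteness keeps \(|\rho|<1\). Hence \(\rho^\dagger_{jk}=\rho_{jk}\), and so
\[
\Sigma^\dagger_{jk}=\rho_{jk}\,\sigma^\dagger_j\sigma^\dagger_k=D_{jj}D_{kk}\Sigma_{jk}.
\]
Together with the diagonal entries this gives \(\Sigma^\dagger=D\Sigma D\).

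\emph{Uniqueness of \(D\).} The diagonal of \(\Sigma^\dagger=D\Sigma D\) reads \(\Sigma^\dagger_{jj}=D_{jj}^2\Sigma_{jj}\). Since \(D_{jj}>0\), each \(D_{jj}\) is determined.

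\emph{Main obstacle.} The delicate step is the strict monotonicity of the bivariate orthant probability in \(\rho\). I plan to cite Plackett's identity for it. Everything else reduces to injectivity of \(\Phi\) and to the fact that an affine map of the line fixing two distinct points is the identity.
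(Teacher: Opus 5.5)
Your proposal is correct and follows essentially the same route as the paper's proof. It matches standardized cutpoints from the margins, uses Plackett's identity to equate pairwise correlations, shows that two distinct finite cutpoints force $D_{jj}=1$, and proves the converse with the coupling $\bar\tau+D(S-\bar\tau)$; your explicit uniqueness step via $\Sigma^\dagger_{jj}=D_{jj}^2\Sigma_{jj}$ is a slightly cleaner version of what the paper simply asserts.
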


\paragraph{What this means.} Each binary coordinate permits a positive affine
stretch around its pass threshold, a classical ordinal-probit indeterminacy
\citep{muthen1984general,olsson1979maximum,chib1998analysis}; when verdicts
supply the optimization reward, it becomes an exactly characterizable blind
set. Even the full joint verdict distribution cannot reveal it. A second fixed threshold forces
the stretch to one. The one-factor model enters only when we characterize which
reference-like changes remain invisible.

\begin{claimbox}
\textbf{Design result: preserve grades rather than binarizing them.} Under
joint-Gaussian scores, a binary criterion adds information and one affine stretch
ambiguity; a second finite threshold removes it. Thus adding binary criteria
buys information and blindness together, and identifiability
is bounded by the coarsest member. Without Gaussianity, extra grades constrain
more points but cannot reveal arbitrary within-grade changes.
\end{claimbox}

\subsection{The blind direction that remains}

\begin{corollary}[Maximality of loading-aligned mean attacks]
\label{cor:blind}
Fix \((m,c)\in\Refset\) and consider a deterministic mean shift
\(S^{\mathrm{att}}\sim N_J(m\lambda+a,\Sigma(c))\). If some pair
\(j\ne k\) has \(\lambda_j\lambda_k\ne0\) and \(\psi_j+\psi_k>0\), the
attacked joint binary or fixed-cutpoint ordinal verdict distribution equals a reference
distribution if
and only if \(a=\delta\lambda\) and \((m',c')=(m+\delta,c)\) for some
\(\delta\in\mathbb R\). \textnormal{(\hyperref[app:proof-blind]{Proof: App.~\ref*{app:proof-blind}.})}
\end{corollary}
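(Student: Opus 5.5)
The plan is to reduce everything to Theorem~\ref{thm:fiber}. The attacked law is \(N_J(m\lambda+a,\Sigma(c))\) and a reference law is \(N_J(m'\lambda,\Sigma(c'))\); both covariances are positive definite (we assume \(\psi_j>0\), or more generally positive definiteness as required by the theorem). By Theorem~\ref{thm:fiber}, the joint verdict distributions coincide if and only if there is a positive diagonal \(D\) with
\[
m'\lambda-\tau=D(m\lambda+a-\tau),\qquad \Sigma(c')=D\,\Sigma(c)\,D ,
\]
where \(\tau\) is the vector of first finite cutpoints. In the ordinal case the theorem additionally forces \(D_{jj}=1\) for \(K_j\ge3\). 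The ``if'' direction is immediate: with \(a=\delta\lambda\), \(m'=m+\delta\), \(c'=c\) and \(D=I\), both laws are literally equal, so the verdicts agree for any cutpoints.

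For ``only if'', I would first exploit the covariance equation. Its off-diagonal entries give
\[
c'\lambda_j\lambda_k=D_{jj}D_{kk}\,c\,\lambda_j\lambda_k
\]
for \(j\ne k\). Its diagonal entries give
\[
\psi_j+c'\lambda_j^2=D_{jj}^2(\psi_j+c\lambda_j^2).
\]
For the distinguished pair with \(\lambda_j\lambda_k\ne0\), write \(d_j=D_{jj}\), \(d_k=D_{kk}\) and \(\rho=c'/c\). Then the off-diagonal equation gives \(d_jd_k=\rho\). The diagonal equations give \(d_j^2=(\psi_j+\rho c\lambda_j^2)/(\psi_j+c\lambda_j^2)\), and similarly for \(d_k\). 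Each ratio lies on the same side of \(1\) as \(\rho\), weakly, with equality exactly when \(\psi_\cdot=0\) or \(\rho=1\).

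The key step is to show \(\rho=1\). Suppose \(\rho>1\). Then \(d_j^2\le\rho\) and \(d_k^2\le\rho\), so \(\rho^2=d_j^2d_k^2\le\rho^2\). Equality forces both \(\psi_j=0\) and \(\psi_k=0\), contradicting \(\psi_j+\psi_k>0\). The case \(\rho<1\) is symmetric. Hence \(c'=c\) and \(d_j=d_k=1\).

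The remaining difficulty, which I expect to be the main obstacle, is propagating \(D=I\) to the other coordinates once \(\rho=1\). The diagonal equation gives \(d_i^2(\psi_i+c\lambda_i^2)=\psi_i+c\lambda_i^2\), so \(d_i=1\) whenever the variance is positive, which positive definiteness guarantees. Thus \(D=I\), and the mean equation reduces to \(m'\lambda=m\lambda+a\). This gives \(a=(m'-m)\lambda=\delta\lambda\) with \(\delta=m'-m\), and \(c'=c\).

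The ordinal case is the same argument applied to first finite cutpoints. The extra constraints \(D_{jj}=1\) are consistent with, and redundant given, the conclusion \(D=I\).
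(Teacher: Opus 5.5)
Your proof is correct under the positive-definiteness assumption you add, but it takes a different route from the paper's.

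\textbf{The paper's route.} It does not pass through Theorem~\ref{thm:fiber}. It applies Plackett's identity to the distinguished pair alone to get \(R_{jk}(c')=R_{jk}(c)\). It then uses the strict monotonicity of \(c\mapsto c/(s_j(c)s_k(c))\), established by a derivative computation in Lemma~\ref{lem:orbit-identification}, to conclude \(c'=c\). Finally it reads the means off the margins.

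\textbf{Your route.} You take the diagonal-stretch characterization as a black box and show that \(\Sigma(c')=D\Sigma(c)D\) with \(D\) positive diagonal forces \(D=I\) and \(c'=c\). Your key observation is that, for \(\rho\ne1\), each \(d_j^2\) lies between \(1\) and \(\rho\) and equals \(\rho\) only when \(\psi_j=0\). This is an elementary, calculus-free proof of the same injectivity fact. It also shows plainly why no stretch can imitate a change of \(c\).

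\textbf{What the black box costs.} Theorem~\ref{thm:fiber} requires positive definite covariances. For \(\Psi+c\lambda\lambda^\top\), positive definiteness forces at most one \(\psi_i=0\). So under your assumption the hypothesis \(\psi_j+\psi_k>0\) holds automatically for every pair. The cases where that hypothesis has real content are exactly the singular-covariance ones (several members with zero residual variance), and your proof does not cover them. The paper reaches them by checking only that the pair block is nonsingular, via its determinant \(c\lambda_j^2\psi_k+c\lambda_k^2\psi_j+\psi_j\psi_k>0\).

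Your algebra survives that change. It uses only \(d_jd_k=\rho\) for the pair and the diagonal variance relations, together with positive marginal variances, which the paper's margin step also tacitly needs. So replacing the appeal to Theorem~\ref{thm:fiber} with its pair-and-margin steps removes your extra assumption.
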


\paragraph{What this means.} After the affine stretch ambiguity is removed, a
fixed-covariance mean attack is invisible exactly when it moves every criterion
in proportion to that criterion's sensitivity to the shared factor. To the
panel, that direction looks exactly like ``the model got better.'' It has the
same signature as a sycophancy-shaped stylistic lift that reads as competence
across every criterion. Grading therefore changes the blind set from many
criterion-specific distortions to one interpretable reference-like direction
in this attack class.

\subsection{What grading cannot fix}

The remaining limitation does not rely on Gaussianity or on the number of
grades.

\begin{proposition}[Pathwise imitation of reference motion]
\label{prop:process}
Fix any jointly distributed latent array
\(\{(Z_i,\varepsilon_{i1},\ldots,\varepsilon_{iJ})\}_{i\in\mathcal I}\). No
Gaussianity, independence, or identical sampling is required. For any target
\((m',c')\), the intervention
\[
\Delta_{ij}=\lambda_j\!\left[(m'-m)+(\sqrt{c'}-\sqrt c)Z_i\right]
\]
satisfies \(S_{ij}(m,c)+\Delta_{ij}=S_{ij}(m',c')\) pathwise. Under any fixed
memberwise quantizers, every fixed-horizon, sequential, or verdict-adaptive
rule therefore receives the same evidence from the intervention and the
matched reference process. Adaptive prompt selection additionally requires the
identity conditional on every selectable prompt and history.
\textnormal{(\hyperref[app:proof-process]{Proof: App.~\ref*{app:proof-process}.})}
\end{proposition}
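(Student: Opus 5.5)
The identity itself is pure algebra, and the evidential claim then follows by a coupling argument. Nothing in it uses the distribution of the latent array. I will therefore treat \((Z_i,\varepsilon_{i1},\ldots,\varepsilon_{iJ})\) as fixed realizations on a common probability space, not as random variables with a law.

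First, expand the definition. Since \(\theta_i(m,c)=m+\sqrt c\,Z_i\), we have \(S_{ij}(m,c)=\lambda_j m+\lambda_j\sqrt c\,Z_i+\varepsilon_{ij}\). Adding \(\Delta_{ij}=\lambda_j[(m'-m)+(\sqrt{c'}-\sqrt c)Z_i]\) gives \(\lambda_j m'+\lambda_j\sqrt{c'}Z_i+\varepsilon_{ij}=S_{ij}(m',c')\). This holds pointwise on the sample space, with the same \(Z_i\) and \(\varepsilon_{ij}\). No independence or Gaussianity is invoked, and \(c,c'>0\) makes the square roots well defined. Second, apply any fixed memberwise quantizer \(q_j\), meaning a deterministic map from scores to grades with cutpoints that do not depend on the policy (the fixed-instrument assumption). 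Then \(q_j(S_{ij}(m,c)+\Delta_{ij})=q_j(S_{ij}(m',c'))\) pathwise. So the entire verdict array \(\{Y_{ij}\}\) of the intervened process coincides, realization by realization, with that of the reference process at \((m',c')\).

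Third, I transfer this to decision rules. A fixed-horizon rule is a measurable function of finitely many verdicts, so it takes identical values on both processes. For sequential or verdict-adaptive rules, I would argue by induction on the step \(t\). Suppose the histories of observed verdicts agree up to step \(t\). The rule's (possibly randomized, with shared external randomness) choice of which \(i\) and which members to query next is then the same. The verdict returned is the same by the pathwise identity, so the histories agree at \(t+1\). Stopping times coincide for the same reason. Equality of the laws of all evidence is then immediate, since the two evidence processes are equal almost surely.

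The main subtlety is the adaptive prompt-selection clause. If the rule chooses which prompt to sample based on history, then the index \(i\) is itself history-dependent. The coupling needs the intervened and reference latents to satisfy the identity for whatever prompt is selected, conditional on the history. I would handle this by indexing the latent array by (prompt, history) and requiring the identity to hold for every such index. This is exactly the additional condition stated. With it, the same induction goes through, because at each step the selected prompt and its latent draw are common to both processes. I expect this clause to be the only place where care is needed. It mostly requires making the filtration and shared randomness explicit rather than any real estimate.
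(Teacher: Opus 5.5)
Your proposal is correct and follows the paper's proof essentially step for step. It uses direct substitution for the pathwise identity, fixed quantizers to carry it to the verdict array, measurability plus a shared auxiliary seed for fixed-horizon and stopping-time rules, and the same induction under the conditional identity for adaptive prompt selection; your explicit induction on histories simply spells out what the paper compresses into treating every rule as a measurable function of the coupled array.
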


\paragraph{What this means.} This identity marks
the irreducible boundary after the avoidable binary stretch ambiguity is
removed. If
external evidence labels the matched intervention as reward hacking, a
verdict-only level-\(\alpha\) test has power at most \(\alpha\) for every
sample size. When \(c'=c\), the shift is simply \((m'-m)\lambda\). Changing
scale additionally requires the realized shared factor or an exact proxy. This is an existence result; whether ordinary training finds such a policy is open.

% =====================================================================
\section{Off the blind set: three detection regimes}
\label{sec:rates}

Figure~\ref{fig:courtroom-taxonomy}c--d shows the two nonzero regimes:
changing margins or cross-criterion dependence; the third is zero-power
motion within the reference family.

\begin{figure*}[t]
    \centering
    \makebox[\textwidth][c]{%
      \includegraphics[width=1.04\textwidth]{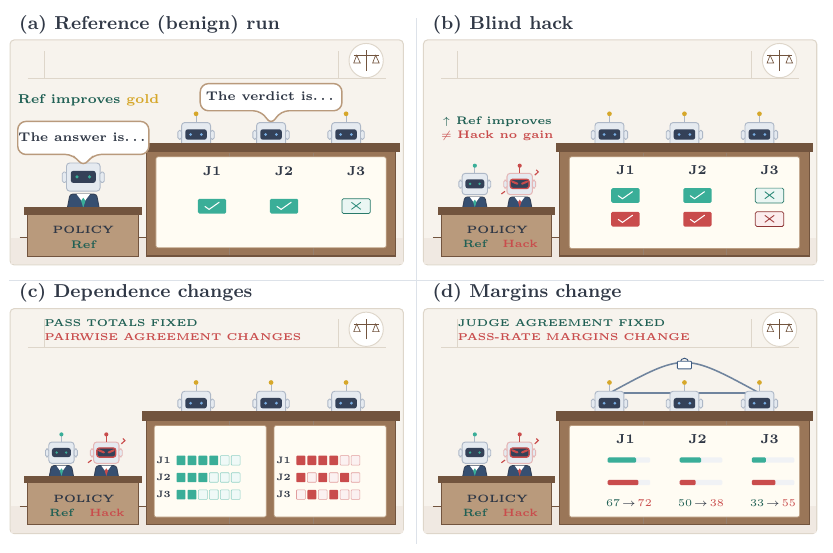}%
    }
    \caption{\textbf{Taxonomy of verdict-only evidence.}
    \textbf{(a)} Reference motion improves verdicts and gold.
    \textbf{(b)} A blind change preserves the joint verdict law while gold worsens.
    \textbf{(c)} Green and red grids show reference and changed pairwise agreement
    at fixed criterion pass totals. \textbf{(d)} Margin changes are detectable
    departures.}
    \label{fig:courtroom-taxonomy}
\end{figure*}

\begin{theorem}[Local information geometry]
\label{thm:local}
Minimum KL separation from the reference family is quadratic for first-order
motion away from it, quartic for first-order tangent motion with nonzero normal
relative acceleration, and zero for paths inside the family. Under the standard
i.i.d.\ heuristic \(nD_{\mathrm{KL}}=O(1)\), these correspond, respectively,
to \(n^{-1/2}\)-scale detection, \(n^{-1/4}\)-scale detection, and zero power.
\textnormal{(\hyperref[app:proof-local]{Full statement and proof:
App.~\ref*{app:proof-local}.})}
\end{theorem}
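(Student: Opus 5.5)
The plan is to reduce everything to a local expansion of the KL divergence between the joint verdict law \(P_\eta\) at a perturbed parameter \(\eta\) and the nearest point of the reference orbit \(\{P_{(m,c)}:(m,c)\in\Refset\}\). Write the observed law as a smooth function of the Gaussian parameters \((\mu,\Sigma)\) through the cell probabilities \(p_\ell(\mu,\Sigma)=\Pr(S\in\text{cell }\ell)\). These are smooth because the cutpoints are fixed and \(\Sigma\succ0\), and all cells have positive probability. For a path \(\eta(t)\) with \(\eta(0)=(m,c)\in\Refset\), define
\[
\delta(t)=\inf_{(m',c')}D_{\mathrm{KL}}\bigl(P_{\eta(t)}\,\|\,P_{(m',c')}\bigr).
\]
The standard second-order expansion of KL on the finite simplex gives \(D_{\mathrm{KL}}(P_{\eta}\|P_{\eta'})=\tfrac12\|p(\eta)-p(\eta')\|_{F}^2+O(\|p(\eta)-p(\eta')\|^3)\), with \(F\) the Fisher metric \(\sum_\ell (dp_\ell)^2/p_\ell\). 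So \(\delta(t)\) is, to leading order, half the squared Fisher distance from \(p(\eta(t))\) to the two-dimensional reference surface \(\mathcal R=p(\Refset)\). First I would check that \(\mathcal R\) is an embedded smooth surface near \(p(m,c)\), i.e.\ that the map \((m,c)\mapsto p\) has rank two. Theorem~\ref{thm:fiber} and Corollary~\ref{cor:blind} supply this: distinct \((m,c)\) give distinct laws under the pair condition, and the derivatives in \(m\) and \(c\) move the margins and the dependence differently, hence are linearly independent.

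The three regimes then follow from decomposing the velocity and acceleration of \(p(\eta(t))\) into tangential and normal parts relative to \(\mathcal R\) in the Fisher metric. Let \(\Pi^\perp\) denote Fisher-orthogonal projection onto the normal space at the base point, and \(\dot p=\tfrac{d}{dt}p(\eta(t))|_{0}\).
\begin{itemize}
\item \textbf{Normal velocity.} If \(\Pi^\perp\dot p\ne0\), the distance to \(\mathcal R\) is \(t\|\Pi^\perp\dot p\|_F+O(t^2)\), so \(\delta(t)=\tfrac{t^2}2\|\Pi^\perp\dot p\|_F^2+O(t^3)\). This is the quadratic regime.
\item \textbf{Tangent velocity.} If \(\dot p\) is tangent, choose a reference path \(q(t)\) on \(\mathcal R\) that matches \(p(\eta(t))\) to first order. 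The residual is then \(\tfrac{t^2}{2}\Pi^\perp(\ddot p-\ddot q)+O(t^3)\). Here \(\Pi^\perp\ddot q\) is the second fundamental form of \(\mathcal R\) evaluated on \(\dot p\). I would define the ``normal relative acceleration'' as \(\Pi^\perp\ddot p-\mathrm{II}(\dot p,\dot p)\). Optimizing over reference points then gives \(\delta(t)=\tfrac{t^4}{8}\|\Pi^\perp\ddot p-\mathrm{II}(\dot p,\dot p)\|_F^2+O(t^5)\), the quartic regime.
\item \textbf{Paths inside the family.} These give \(\delta\equiv0\) trivially.
\end{itemize}

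The main obstacle is making the infimum over the reference family behave. The minimizer \((m'(t),c'(t))\) must exist and be unique, and it must converge to \((m,c)\) as \(t\to0\), so that the local expansion applies at the minimizer. I would handle this in two steps:
\begin{itemize}
\item \textbf{Reduction to a compact neighbourhood.} Far-away reference parameters are excluded by identifiability (Corollary~\ref{cor:blind}) together with a coercivity argument: as \(c'\to0\), \(c'\to\infty\), or \(|m'|\to\infty\), some margins or pairwise tables degenerate, so KL stays bounded below. A compactness argument then confines the minimizer to a neighbourhood of the base point.
\item \textbf{Smooth minimizer.} Inside that neighbourhood, the implicit function theorem applied to the first-order condition gives a smooth minimizer \((m'(t),c'(t))\). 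Its Hessian is the \(2\times2\) Fisher information of the reference family, which is nonsingular by the rank-two claim.
\end{itemize}
The remaining expansions are routine Taylor computations.

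Finally, the detection rates come from the i.i.d.\ heuristic, taking \(nD_{\mathrm{KL}}\asymp1\) as the threshold for nontrivial power; this is only a heuristic, matching the statement. Setting \(n t^2\asymp1\) gives detectable \(t\asymp n^{-1/2}\) in the quadratic regime. Setting \(n t^4\asymp1\) gives \(t\asymp n^{-1/4}\) in the quartic regime. Inside the family the law coincides with a reference law, so any level-\(\alpha\) test has power \(\alpha\), consistent with Proposition~\ref{prop:process}.
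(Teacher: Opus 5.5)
Your proposal is correct and matches the paper's proof in all essentials. Both use the Fisher-quadratic expansion of KL and the projection onto the rank-two tangent space, which the paper gets from Lemma~\ref{lem:orbit-identification}. Your \(\Pi^\perp\ddot p-\mathrm{II}(\dot p,\dot p)\) is exactly the paper's \(\kappa=(I-\Pi_{\mathcal T})\{v-B[a,a]\}\), and both obtain the local minimizer from the implicit function theorem with Hessian \(Q^\top\operatorname{diag}(p_0^{-1})Q\).

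The only difference is the globalization step. The paper replaces your boundary case analysis with a single sequential argument: \(\operatorname{TV}(q_{\eta_n},p_0)\to0\) forces \(\eta_n\to\eta_0\) by continuously inverting the correlation and the margins, and Pinsker plus the triangle inequality then keeps \(D_{\mathrm{KL}}(p_t\Vert q_\eta)\) bounded away from zero outside a neighbourhood, uniformly for small \(t\). Your case analysis is also slightly off as stated: as \(c'\to0\) the tables need not degenerate, because the correlation merely tends to \(0\ne R_{jk}(c_0)\).
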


We test only the first and zero-power regimes.

% =====================================================================
\section{Evidence: the fix, its scope, and its boundary}
\label{sec:exp}

We train \texttt{Qwen/Qwen3-4B-Instruct-2507} with GRPO \citep{shao2024deepseekmath} on MATH-500
\citep{hendrycks2021math,lightman2023verify}, SciBench \citep{wang2023scibench}, and HealthBench
\citep{arora2025healthbench}, scored by a fixed \texttt{Qwen/Qwen3.5-27B} judge on seven
criteria (four for HealthBench). Gold is withheld from training and panel
analysis. Appendix~\ref{app:protocol} reports the protocol; Appendix~\ref{app:reproducibility} gives training and reproducibility details.

\paragraph{Model-implied stretch stress.}
We freeze the seven-criterion geometry.
Rescaling one fitted criterion at a time about its binary cutpoint is a theorem
sanity check: binary complete-law total variation (TV) is zero for all
\(14\) criterion--domain pairs at each stress level. With three grades and a \(1.5\times\) stress, TV is
\(0.0234\)--\(0.1146\) on MATH and \(0.0248\)--\(0.1117\) on SciBench.
At \(n=1{,}024\), estimated power for a level-\(0.05\) likelihood-ratio test
given both population laws was at least \(96.5\%\) on MATH and \(97.7\%\) on
SciBench across all 14 experiments (4,000 repetitions). These fitted-law
stresses are summarized in Figure~\ref{fig:grading}.

\paragraph{Realized-score stress.}
Applying the same threshold-centered transformation directly to realized
scores leaves every binary verdict vector unchanged pathwise, while empirical
three-grade TV is \(0.070\)--\(0.276\) across transforms, so the effect is not
merely a fitted-law artifact.

\begin{figure}[t]
    \centering
    \includegraphics[width=0.88\linewidth]{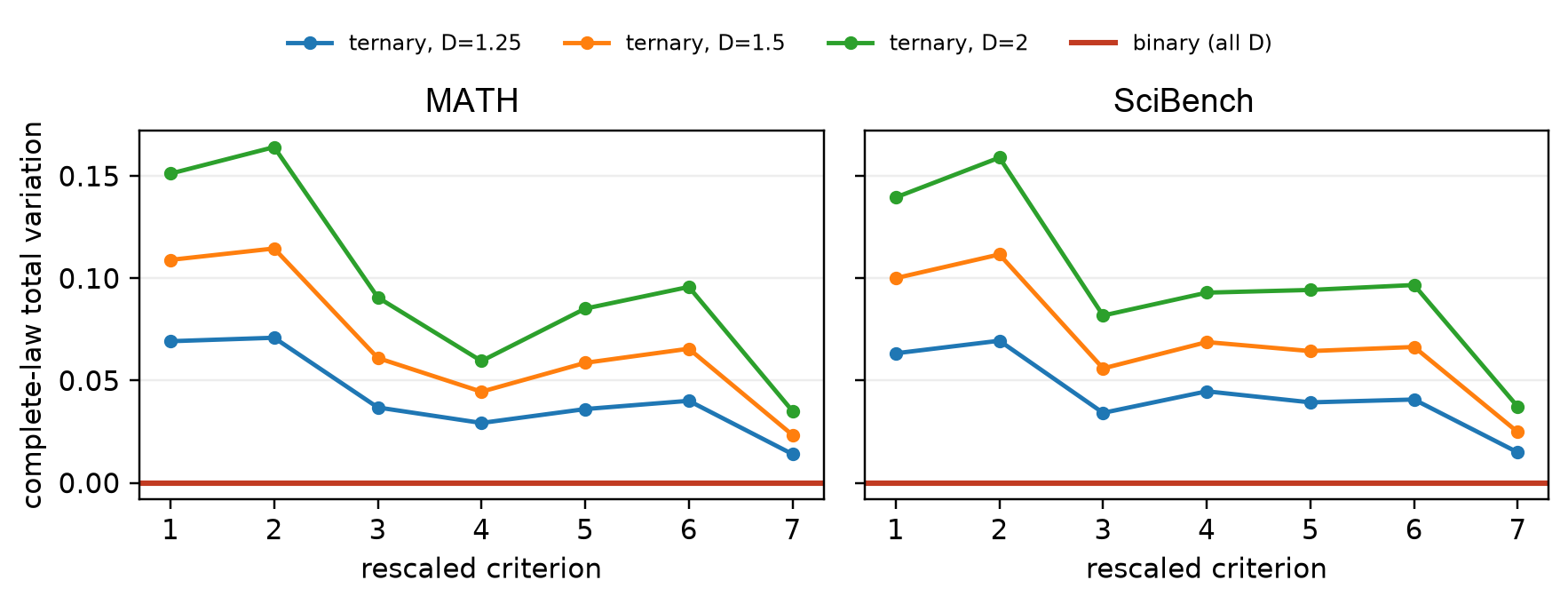}
    \caption{\textbf{Grading resolution removes the fitted binary stretch ambiguity.}
    Each point rescales one criterion. Binary complete-law TV is zero
    throughout; a third grade reveals every direction.}
    \label{fig:grading}
\end{figure}

\paragraph{The residual blind direction is realizable.} On frozen MATH
geometry, a constructed genuine-gain law raises the shared factor while an
exploit law lowers it and applies the matched loading-aligned translation.
Mapping the shared factor through the study's gold-accuracy link predicts that
the two constructed policy laws differ by 18.25 percentage points
in implied gold accuracy, yet their panel rewards match and their complete
\(2{,}187\)-cell joint grade tables are numerically identical (population TV
\(1.6\times10^{-16}\)). In a separate construction perturbed away from that
exact equality, TV rises to \(0.0328\) and estimated known-law oracle power is
\(0.9387\) at \(n=1{,}024\). Grading removes the affine stretch ambiguity, but
not this reference-like construction.

\paragraph{The orbit is an approximation.} On
post-calibration reference windows, orbit prediction improves margins and
dependence over a no-motion baseline on MATH and SciBench, but residuals remain
above sampling floors (Table~\ref{tab:orbit}). HealthBench does not beat the no-motion null, perhaps because heterogeneous clinical tasks induce criterion-specific rather than shared-factor motion; we therefore do not interpret its orbit coordinates.

\begin{table}[t]
\centering
\footnotesize
\setlength{\tabcolsep}{4pt}
\caption{\textbf{Reference-arm orbit stress test.} Raw orbit/no-motion errors
and sampling floors (matched-sample noise under a held-fixed fitted law) on
post-calibration windows. Scale gap is
\(|\hat c_{\mathrm{marg}}-\hat c_{\mathrm{dep}}|\); threshold entries report
max raw/residual drift and RMS floor; explained fractions aggregate over
thresholds. HealthBench beats the no-motion null on neither margins nor dependence; we read this as the scope condition failing and do not interpret its orbit coordinates.}
\label{tab:orbit}
\begin{tabular}{@{}lcccc@{}}
\toprule
Domain & \begin{tabular}[c]{@{}c@{}}Margin L1\\orbit/null/floor\end{tabular}
       & \begin{tabular}[c]{@{}c@{}}Dependence RMSE\\orbit/null/floor\end{tabular}
       & \begin{tabular}[c]{@{}c@{}}Cross-channel\\scale gap\end{tabular}
       & \begin{tabular}[c]{@{}c@{}}Threshold drift\\raw/resid./floor; expl.\end{tabular} \\
\midrule
MATH        & \(0.119/0.322/0.013\) & \(0.089/0.206/0.013\) & \(0.304\) & \(0.956/0.236/0.029\);~67\% \\
SciBench    & \(0.076/0.314/0.013\) & \(0.043/0.070/0.019\) & \(0.046\) & \(0.767/0.203/0.037\);~72\% \\
HealthBench & \(0.142/0.106/0.010\) & \(0.149/0.080/0.024\) & \(0.262\) & \(0.260/0.157/0.017\);~35\% \\
\bottomrule
\end{tabular}
\end{table}

\paragraph{Cutpoint stability is monitored.}
Fixed cutpoints are required for \(D=I\). Residual drift exceeds its sampling
floor on MATH (\(0.236\) vs.\ \(0.029\)) and SciBench (\(0.203\) vs.\
\(0.037\)); monitor drift and middle-grade occupancy and recalibrate as needed.

\paragraph{Departure was reachable; natural exact blindness was not.}
Ten signed presentation-over-soundness trajectories depart, but four
candidate-pool searches find no natural exact blind attack. The blind set is
constructible but was not observed in ordinary training.

\phantomsection
\label{sec:claims}
\begin{claimbox}
\textbf{We claim.} (1) Binary panels have a coordinatewise affine stretch
ambiguity under the Gaussian model; fixed ternary grading removes it. (2) The
effect appears for every criterion in two frozen real geometries. (3) The
remaining loading-aligned family is exact in the declared attack class. (4)
Off-orbit information scales with Fisher-normal displacement.

\smallskip
\textbf{We do not claim.} That grading detects all reward hacking; a validated
detector; or that orbit membership implies benignity. Stresses are constructed;
the judge and residual correlation are unvalidated; replay is same-run and
inexact; and HealthBench fails the reference-motion check.
\end{claimbox}

% =====================================================================
\section{What a practitioner should do}
\label{sec:implications}

\paragraph{Preserve resolution.} When grades are available, do not collapse
them to pass/fail. Under the joint-Gaussian model, three fixed grades and
two fixed thresholds are the fewest that suffice; finer scales, up to a
continuous score, are no worse but remove no further stretch and add cutpoints
to hold fixed (App.~\ref{app:protocol}). This excludes binary verifiers, and no
resolution resolves within-grade or loading-aligned shifts.

\paragraph{Monitor reduced verdict statistics.} Under the Gaussian model,
grade margins and pairwise tables identify \((m,c)\) without the sparse
\(3^J\)-cell joint grade table (\(2{,}187\) cells here). Freeze the instrument, fit by
pairwise composite likelihood, and bootstrap the residual to flag departures.

\paragraph{Pair verdicts with external evidence.} On-orbit silence is
uninformative, and off-orbit movement may be genuine improvement. Treat
departure as a trigger for held-out gold, an independent judge, or response
inspection and not as a diagnosis.

\clearpage
\bibliographystyle{plainnat}
\bibliography{references}

\clearpage
\appendix
\section{Proofs}
\label{app:proofs}

This appendix proves every formal result stated in the main text. We first
record the one property of the reference family used in two of the proofs.

\subsection{Reference-orbit identification}

Write \(\Psi=\operatorname{diag}(\psi)\),
\(\Sigma(c)=\Psi+c\lambda\lambda^\top\), and
\[
 s_j(c)^2=\operatorname{Var}(S_j)=\psi_j+c\lambda_j^2.
\]
For a finite cutpoint \(\tau_{j\ell}\), let
\(p_{j\ell}=\Pr(S_j>\tau_{j\ell})\), and let
\(R_{jk}(c)=\operatorname{Corr}(S_j,S_k)\).

\begin{lemma}[Identification of the reference orbit]
\label{lem:orbit-identification}
Suppose some pair \(j\ne k\) satisfies
\(\lambda_j\lambda_k\ne0\) and \(\psi_j+\psi_k>0\). Then the joint
Gaussian verdict distribution identifies \((m,c)\). The map
\((m,c)\mapsto q_{m,c}\) of reference parameters to verdict-cell
probabilities is smooth, one-to-one, and has rank two at every interior law.
\end{lemma}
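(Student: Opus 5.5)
We prove identification by extracting two scalar equations from the verdict law and showing that they pin down \(c\) and then \(m\). The observable summaries are as follows. Each member's binary (or first-cutpoint) margin \(p_{j\ell}=\Pr(S_j>\tau_{j\ell})=\Phi\bigl((m\lambda_j-\tau_{j\ell})/s_j(c)\bigr)\) determines the standardized threshold \(t_{j\ell}=(\tau_{j\ell}-m\lambda_j)/s_j(c)\). For the distinguished pair, the \(2\times2\) table of \((\ind\{S_j>\tau_{j\ell}\},\ind\{S_k>\tau_{k\ell'}\})\) is a bivariate normal orthant probability. Given the two margins, this probability is strictly increasing in the latent correlation, by Slepian/Plackett monotonicity \(\partial_\rho \Phi_2(a,b;\rho)=\phi_2(a,b;\rho)>0\). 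So the joint verdict law determines \(R_{jk}(c)=c\lambda_j\lambda_k/(s_j(c)s_k(c))\).

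The first step is to recover \(c\). We show \(c\mapsto R_{jk}(c)^2=c^2\lambda_j^2\lambda_k^2/\bigl((\psi_j+c\lambda_j^2)(\psi_k+c\lambda_k^2)\bigr)\) is strictly increasing on \(c>0\). Write it as \(\prod_{i\in\{j,k\}} c\lambda_i^2/(\psi_i+c\lambda_i^2)\). Each factor is nondecreasing in \(c\), and it is strictly increasing exactly when \(\psi_i>0\). This is where \(\lambda_j\lambda_k\ne0\) and \(\psi_j+\psi_k>0\) enter: at least one factor is strictly increasing and both are positive, so the product is strictly increasing. The sign of \(R_{jk}\) is fixed by \(\lambda_j\lambda_k\), so \(R_{jk}\) itself determines \(c\).

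The second step is to recover \(m\). With \(c\) known, \(s_j(c)\) is known, and then \(t_{j\ell}\) gives \(m=(\tau_{j\ell}-s_j(c)t_{j\ell})/\lambda_j\), using \(\lambda_j\ne0\). This proves injectivity of \((m,c)\mapsto q_{m,c}\). Smoothness is immediate, because every cell probability is a finite signed combination of multivariate normal rectangle probabilities whose integrands depend smoothly on \((m,c)\) through \(\mu=m\lambda\) and \(\Sigma(c)\), with \(\Sigma(c)\) positive definite.

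The final step, and the one needing the most care, is the rank-two claim. We compose with the reduced map \((m,c)\mapsto(t_{j\ell},R_{jk})\), which is a smooth function of the cell probabilities \(q\): margins are sums of cells, \(\Phi^{-1}\) is smooth on \((0,1)\), and the inverse correlation map is smooth because \(\phi_2>0\). By the chain rule, rank two of this composite forces rank two of \(Dq\). The Jacobian of \((m,c)\mapsto(t_{j\ell},R_{jk})\) is triangular:
\begin{itemize}
\item \(\partial_m R_{jk}=0\);
\item \(\partial_c R_{jk}\ne0\), since \(\partial_c R_{jk}^2>0\) by the strict monotonicity established above (differentiate the product; the strict factor has derivative \(\lambda_i^2\psi_i/(\psi_i+c\lambda_i^2)^2>0\)) and \(R_{jk}\ne0\) at \(c>0\);
\item \(\partial_m t_{j\ell}=-\lambda_j/s_j(c)\ne0\).
\end{itemize}
Hence the determinant is nonzero at every interior \((m,c)\). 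The main obstacle is making sure the correlation recovery is smooth and valid at interior laws, where all margins lie in \((0,1)\). For graded members the same argument applies using any finite cutpoint.
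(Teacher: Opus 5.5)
Your proof is correct and follows essentially the same route as the paper. You read $R_{jk}$ off the pair's cumulative cell via Plackett monotonicity, invert the probit margin for $m$, and get rank two from the same "zero correlation component for $\partial_m$, nonzero for $\partial_c$" triangularity, which you make more explicit through the chain rule on $(t_{j\ell},R_{jk})$. Your product-of-factors monotonicity for $R_{jk}^2$ is equivalent to the paper's log-derivative $\frac{1}{2c}\{\psi_j/s_j(c)^2+\psi_k/s_k(c)^2\}>0$.
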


\begin{proof}
For the informative pair,
\[
 R_{jk}(c)=
 \frac{c\lambda_j\lambda_k}{s_j(c)s_k(c)}.
\]
The factor multiplying \(\lambda_j\lambda_k\) is one-to-one in \(c\), since
\[
 \frac{d}{dc}\log\frac{c}{s_j(c)s_k(c)}
 =\frac{1}{2c}\left\{
   \frac{\psi_j}{s_j(c)^2}+\frac{\psi_k}{s_k(c)^2}
 \right\}>0.
\]
Thus \(R_{jk}(c)\) identifies \(c\). At any finite cutpoint,
\[
 p_{j\ell}
 =\Phi\!\left(\frac{m\lambda_j-\tau_{j\ell}}{s_j(c)}\right),
\qquad
 m=\frac{s_j(c)\Phi^{-1}(p_{j\ell})+\tau_{j\ell}}{\lambda_j},
\]
so a nonzero loading then identifies \(m\).

It remains only to connect these quantities to the joint verdict distribution.
Its univariate
margins give every \(p_{j\ell}\). For the informative pair, a bivariate
cumulative cell probability has the form \(\Phi_2(a,b;R_{jk})\), with finite
\(a,b\) already known from the margins. Plackett's identity
\citep{plackett1954reduction} gives
\(\partial\Phi_2(a,b;\rho)/\partial\rho=\phi_2(a,b;\rho)>0\), so that cell
probability identifies \(R_{jk}\). This proves injectivity. Smoothness is
immediate. The \(c\)-derivative has a nonzero correlation component, whereas
the \(m\)-derivative has zero correlation component and a nonzero margin
component, so the two derivatives are linearly independent.
\end{proof}

\subsection{Gaussian equivalence of joint verdict distributions}
\label{app:proof-fiber}

\begin{proof}[Proof of Theorem~\ref{thm:fiber}]
We prove the mixed binary--graded statement; the all-binary case is its special
case. Write
\[
 \sigma_j^2=\Sigma_{jj},\qquad
 (\sigma_j^\dagger)^2=\Sigma^\dagger_{jj},
\]
and let \(R,R^\dagger\) be the corresponding correlation matrices. Equality
of the marginal verdict distributions implies, at every finite cutpoint,
\[
 \frac{\tau_{j\ell}-\mu_j}{\sigma_j}
 =
 \frac{\tau_{j\ell}-\mu_j^\dagger}{\sigma_j^\dagger}.
 \tag{A.1}
\]
Indeed, both sides are obtained by applying \(\Phi^{-1}\) to the same
cumulative category probability.

For \(j\ne k\), equality of a bivariate cumulative verdict probability at one
finite cutpoint from each member gives
\[
 \Phi_2(a_{j\ell},a_{kr};R_{jk})
 =\Phi_2(a_{j\ell},a_{kr};R_{jk}^\dagger),
\]
where the standardized cutpoints agree by (A.1). Positive definiteness puts
both correlations in \((-1,1)\), and Plackett's identity makes the bivariate
normal cdf strictly increasing in its correlation. Hence \(R^\dagger=R\).

Set \(d_j=\sigma_j^\dagger/\sigma_j>0\). If member \(j\) is binary, (A.1)
at its only finite threshold \(\bar\tau_j\) yields
\[
 \mu_j^\dagger-\bar\tau_j
 =d_j(\mu_j-\bar\tau_j).
 \tag{A.2}
\]
If \(K_j\ge3\), subtracting (A.1) at two distinct finite cutpoints gives
\(\sigma_j^\dagger=\sigma_j\), so \(d_j=1\), and then (A.1) gives
\(\mu_j^\dagger=\mu_j\). Finally,
\[
 \Sigma_{jk}^\dagger
 =\sigma_j^\dagger\sigma_k^\dagger R_{jk}^\dagger
 =d_jd_k\Sigma_{jk}.
\]
Thus the stated relations hold with the unique matrix
\(D=\operatorname{diag}(d_1,\ldots,d_J)\).

Conversely, suppose those relations hold, choosing \(\bar\tau_j\) as the sole
threshold for a binary member and the first finite cutpoint for a graded one.
The Gaussian vector
\[
 \widetilde S=\bar\tau+D(S-\bar\tau)
\]
has mean \(\mu^\dagger\) and covariance \(\Sigma^\dagger\), hence the law of
\(S^\dagger\). Positive scaling about a binary threshold preserves its label.
Every graded coordinate has \(d_j=1\) and is unchanged. Therefore the entire
labeled verdict vector is preserved pathwise, which proves equality of the
joint verdict distributions.
\end{proof}

\subsection{The remaining loading-aligned family}
\label{app:proof-blind}

\begin{proof}[Proof of Corollary~\ref{cor:blind}]
Suppose the attacked joint verdict distribution equals a reference
distribution at \((m',c')\).
For the pair in the corollary, the determinant of its covariance block is
\[
 c\lambda_j^2\psi_k+c\lambda_k^2\psi_j+\psi_j\psi_k>0,
\]
so the pair is nonsingular. Equality of its bivariate verdict distribution identifies
its latent correlation by the same Plackett argument as above. The
one-to-one correlation formula in Lemma~\ref{lem:orbit-identification} then
forces \(c'=c\).

The attacked and reference marginal variances are consequently equal for
every member. Equality of a cumulative probability at any common finite
cutpoint identifies the corresponding marginal means, giving
\[
 m\lambda+a=m'\lambda.
\]
Therefore \(a=(m'-m)\lambda\). Conversely, if
\(a=\delta\lambda\), then the attacked latent law is exactly the reference
law at \((m+\delta,c)\). This proves both necessity and sufficiency.
\end{proof}

\subsection{Pathwise imitation}
\label{app:proof-process}

\begin{proof}[Proof of Proposition~\ref{prop:process}]
Direct substitution gives, simultaneously for every \(i,j\),
\begin{align*}
 S_{ij}(m,c)+\Delta_{ij}
 &=\lambda_j(m+\sqrt c\,Z_i)+\varepsilon_{ij}
   +\lambda_j\{(m'-m)+(\sqrt{c'}-\sqrt c)Z_i\}\\
 &=\lambda_j(m'+\sqrt{c'}Z_i)+\varepsilon_{ij}
 =S_{ij}(m',c').
\end{align*}
Fixed quantizers therefore produce identical verdict arrays under this
coupling. Any fixed-horizon or stopping-time rule is a measurable function of
that array and, if randomized, an auxiliary random seed that can be shared
under the coupling. Its output is consequently identical pathwise. The same
induction covers adaptive prompt selection when the identity holds
conditionally for every prompt and history the rule can select.
\end{proof}

\subsection{Local information geometry}
\label{app:proof-local}

\paragraph{Full statement.}
Let \(\mathcal M=\{q_\eta:\eta\in\Refset\}\) be the reference cell
probabilities and let \(\Pi_{\mathcal T}\) project onto the reference family's
tangent space in Fisher geometry. For a smooth joint-verdict-distribution path
\(p_t\) through an interior reference distribution, write
\(u=\dot p_0\) and \(u_\perp=(I-\Pi_{\mathcal T})u\). Then
\[
\inf_\eta D_{\mathrm{KL}}(p_t\Vert q_\eta)
=\tfrac{t^2}{2}\|u_\perp\|_{p_0}^2+o(t^2).
\]
If \(u_\perp=0\), the leading term is
\(\tfrac{t^4}{8}\|\kappa\|_{p_0}^2+o(t^4)\), where \(\kappa\) is the
normal relative acceleration after subtracting the reference family's
curvature.

Let \(\mathcal Y\) be the finite verdict alphabet and
\(p_0=q_{\eta_0}\) an interior reference distribution. On the zero-sum tangent space,
use the Fisher inner product
\[
 \langle v,w\rangle_{p_0}
 =\sum_{y\in\mathcal Y}\frac{v(y)w(y)}{p_0(y)}.
\]
Write \(Q=Dq_{\eta_0}\) for the Jacobian of the reference map and
\(B=D^2q_{\eta_0}\) for its second derivative. Then
\(\mathcal T=\operatorname{range}(Q)\). If \(u\in\mathcal T\), the rank-two
property gives a unique \(a\) with \(u=Qa\); with
\(v=\ddot p_0\), the normal relative acceleration in
Theorem~\ref{thm:local} is
\[
 \kappa=(I-\Pi_{\mathcal T})\{v-B[a,a]\}.
 \tag{A.3}
\]

\begin{proof}[Proof of Theorem~\ref{thm:local}]
For positive laws \(p,q\to p_0\) on a finite alphabet, Taylor expansion gives
\[
 D_{\mathrm{KL}}(p\Vert q)
 =\frac12\lVert p-q\rVert_{p_0}^2
  +o(\lVert p-q\rVert_{p_0}^2).
 \tag{A.4}
\]
The linear term vanishes because both laws sum to one.

First consider the local infimum. For a reference curve
\(\eta(t)=\eta_0+tb+o(t)\),
\[
 p_t-q_{\eta(t)}=t(u-Qb)+o(t).
\]
Minimizing the leading Fisher norm over \(b\) projects \(u\) onto
\(\mathcal T\). Equation (A.4) therefore gives
\[
 \inf_{\eta\ \mathrm{near}\ \eta_0}
 D_{\mathrm{KL}}(p_t\Vert q_\eta)
 =\frac{t^2}{2}\lVert u_\perp\rVert_{p_0}^2+o(t^2).
 \tag{A.5}
\]
Formally, \(Q\) has full column rank by
Lemma~\ref{lem:orbit-identification}, so the Hessian of
\(\eta\mapsto D_{\mathrm{KL}}(p_0\Vert q_\eta)\) is the positive-definite
matrix \(Q^\top\operatorname{diag}(p_0^{-1})Q\). The implicit-function
theorem thus supplies the unique smooth local minimizer used in (A.5).

Now suppose \(u=Qa\). A second-order reference curve has the form
\(\eta(t)=\eta_0+ta+t^2b/2+o(t^2)\), and hence
\[
 p_t-q_{\eta(t)}
 =\frac{t^2}{2}\{v-B[a,a]-Qb\}+o(t^2).
\]
Minimizing over \(b\) removes the tangent component and leaves \(\kappa\)
from (A.3). Applying (A.4) again yields
\[
 \inf_{\eta\ \mathrm{near}\ \eta_0}
 D_{\mathrm{KL}}(p_t\Vert q_\eta)
 =\frac{t^4}{8}\lVert\kappa\rVert_{p_0}^2+o(t^4).
 \tag{A.6}
\]

It remains to show that the infima in (A.5)--(A.6) are global. Fix a
neighborhood \(U\) of \(\eta_0\). If a sequence
\(\eta_n=(m_n,c_n)\notin U\) satisfied
\(\operatorname{TV}(q_{\eta_n},p_0)\to0\), all of its univariate and
bivariate verdict margins would converge to those of \(p_0\). At the
informative pair, continuous inversion of the bivariate Gaussian cdf would
give \(R_{jk}(c_n)\to R_{jk}(c_0)\). The strict monotonicity in
Lemma~\ref{lem:orbit-identification} implies \(c_n\to c_0\), and any finite
cutpoint with nonzero loading then gives
\[
 m_n=
 \frac{s_j(c_n)\Phi^{-1}(p_{j\ell,n})+\tau_{j\ell}}{\lambda_j}
 \longrightarrow m_0,
\]
contradicting \(\eta_n\notin U\). Hence
\(\delta_U:=\inf_{\eta\notin U}\operatorname{TV}(p_0,q_\eta)>0\).
For small enough \(t\), the triangle inequality and Pinsker's inequality give
\[
 \inf_{\eta\notin U}D_{\mathrm{KL}}(p_t\Vert q_\eta)
 \ge \delta_U^2/2,
\]
whereas the local comparison tends to zero. Thus no parameter outside \(U\)
can attain or approach the global infimum, so the global and local infima
agree. This completes the proof.
\end{proof}

\section{Experimental protocol and evidence scope}
\label{app:protocol}

\paragraph{Policies, tasks, and judge.}
\texttt{Qwen/Qwen3-4B-Instruct-2507} policies are trained with GRPO \citep{shao2024deepseekmath} on MATH-500, SciBench, and
HealthBench. One fixed \texttt{Qwen/Qwen3.5-27B} judge scores each response on
seven criteria for MATH and SciBench and four for HealthBench. MATH and
SciBench use the grading and binarization rule reported in
Section~\ref{sec:intro}. Task-specific external gold is withheld from both
training and verdict-only analysis.

\paragraph{Frozen measurement geometry.}
For the grading-resolution ablation, thresholds, loadings, and residual
variances are estimated from \(8{,}192\) early rollouts per domain and then
held fixed. Finite thresholds are recovered from cumulative graded margins by
probit inversion; pairwise latent correlations are obtained from the
polychoric Gaussian model; and the one-factor geometry is fit to the resulting
latent correlation structure. Middle-grade occupancy pools all seven criteria:
\(9{,}526/57{,}344=16.6\%\) on MATH and
\(9{,}721/57{,}344=17.0\%\) on SciBench; each criterion contributes
\(8{,}192\) ratings. Post-calibration replay fits the earliest at
most eight steps of each run, excludes calibration-overlapping evaluation
windows, and then freezes the geometry. The replay covers five
reference-designated and five stress-test runs in each of MATH and SciBench,
plus five reference-designated and three adversarial stress runs in HealthBench.
Reference designation is a protocol label and does not by itself establish
semantic benignity.

\paragraph{Why three grades, and why not more.}
Three is the fewest that works, and under the joint-Gaussian model it is also
enough. In the proof of Theorem~\ref{thm:fiber}, two distinct finite cutpoints
force \(\sigma_j^\dagger=\sigma_j\), hence \(d_j=1\) and
\(\mu_j^\dagger=\mu_j\): the entire marginal score law of that member is pinned
once \(K_j\ge3\), so a fourth or fifth grade removes no further stretch.
Identification is memberwise, so the operative rule is that no member may be
binary, not that every member carry exactly three grades; a single binary
member reintroduces one stretch on its own coordinate. A fully continuous score
would not reach what remains. Both residual results are equalities of the score
law itself rather than quantization artifacts: the loading-aligned attack of
Corollary~\ref{cor:blind} satisfies
\(N_J(m\lambda+\delta\lambda,\Sigma(c))=N_J((m+\delta)\lambda,\Sigma(c))\),
which is exactly the reference law at \((m+\delta,c)\), and the intervention of
Proposition~\ref{prop:process} matches the reference process pathwise. Neither
is visible to any function of the scores, at any resolution. Finer scales also
carry costs our own measurements show are not hypothetical: \(D=I\) requires
every cutpoint to stay fixed, and residual cutpoint drift already exceeds its
sampling floor (\(0.236\) vs.\ \(0.029\) on MATH, \(0.203\) vs.\ \(0.037\) on
SciBench); identifying a cutpoint requires mass on both of its sides, and the
middle grade already falls to \(7.0\%\) on the thinnest criterion; and the
joint grade table grows as \(K^J\), from \(2{,}187\) cells at \(K=3\) and
\(J=7\) to \(78{,}125\) at \(K=5\), which is why
Section~\ref{sec:implications} recommends margins and pairwise tables instead.
We did not test judges that emit continuous scores, so this comparison is a
statement about the model and not an empirical one.

\paragraph{Distances, sampling floor, and oracle power.}
For categorical distributions \(p\) and \(q\), total variation is
\[
  \operatorname{TV}(p,q)=\tfrac12\sum_y |p(y)-q(y)|.
\]
Known-law power uses the likelihood-ratio test between two fixed population
distributions at level \(0.05\), estimated from 4,000 Monte Carlo repetitions
with \(n=1{,}024\) samples per repetition. It is an upper benchmark because an
operational monitor must estimate its reference law. Sampling floors are
generated from matched-size parametric samples of a held-fixed fitted law,
without rerunning the full fit; they are descriptive finite-sample noise
levels, not confidence bounds.

\paragraph{Constructed and observed searches.}
The model-implied stress rescales one fitted criterion at a time about its
binary threshold, leaving all other coordinates fixed. The direct
realized-score stress applies the corresponding transformation to observed
scores. The residual-direction construction compares a shared-factor gain
with a matched loading-aligned translation and then perturbs transversely away
from that exact equality. Candidate-pool searches use four-to-eight-step
windows (approximately \(4{,}096\)--\(8{,}192\) rollouts). The reweighting harm
gate requires at least a five-point gold-accuracy loss, and the component
tolerance bounds every transverse moment residual by \(0.025\). For the
weak-verifier screen, breadth requires high-verifier incorrect selections on
at least 10 prompts and in all five policy seeds; validity requires at least
99\% parse-valid and at most 1\% unscorable responses; and truncation permits
at most 1\% hard-truncated responses. Gold-blind reweighting gains \(3.07\)
percentage points of signed reward and loses \(4.40\) points of gold accuracy,
missing both gates. Presentation-only selection raises both
presentation (\(6.22\) points) and gold accuracy (\(5.66\) points). The
matched-control construction stops before reading gold because it cannot
optimize the required control under the frozen geometry.
Weak-verifier best-of-\(N\) gains \(0.1556\) proxy score, but its accuracy is
\(0.52\) points above matched random (95\% seed-first, prompt-second
percentile-bootstrap CI \([-3.13,4.69]\)) and its breadth, validity, and
truncation gates fail. These pool-specific failures do not establish that
trained policies can never enter the blind set.

\paragraph{Additional operational disclosures.}
\textbf{Output resolution.} Three fixed grades; two cutpoints held fixed; no
pass/fail collapse. \textbf{Oracle benchmark.} At \(n=1{,}024\), three grades
reveal all 14 criterion--domain pairs at the \(1.5\times\) stress with
known-law power at least \(96.5\%\);
this is not operational power. \textbf{External trigger.} For a
residual-direction gain or failed fit, require held-out gold, an independent
judge, or response inspection.

% =====================================================================
\section{Reproducibility details for the confirmatory runs}
\label{app:reproducibility}

\paragraph{Scope, pairing, and intervention.}
This section documents the MATH and SciBench confirmatory trajectories used in
the replay analyses. Each domain crosses five registered seeds
\(\{1,2,3,4,5\}\) with benign and hacked arms, for ten runs per domain and
twenty runs total. Within a seed, the two arms use the same
\texttt{data.seed} and shuffled data order. Each run starts afresh from
\texttt{Qwen/Qwen3-4B-Instruct-2507}; checkpoint resumption is disabled. The
only arm-level intervention is the criterion weighting in the training reward.
For MATH, the hacked weights are
\(w_{\rm presentation}=1.0\), \(w_{\rm thoroughness}=0.0\), and
\(w_{\rm soundness}=-0.6\). For SciBench, they are
\(w_{\rm presentation}=1.0\) and \(w_{\rm soundness}=-0.6\). In both benign
arms the weight variable is unset, which invokes the uniform mean of all seven
criteria. All other data, optimization, sampling, and serving settings are
shared across arms.

Table~\ref{tab:repro-matrix} reports untruncated MATH windows (436 steps; 135
remain after the coherence-collapse gate) and SciBench windows ending near
collapse (180 steps; 149 remain).

\begin{center}
  \centering
  \small
  \captionof{table}{\textbf{Confirmatory run matrix.} ``Post-reference'' excludes the fixed
  reference epoch at steps 0--8. Configured horizons and retained artifact
  lengths are separated because two MATH benign runs retain legacy 150-step
  trajectories.}
  \label{tab:repro-matrix}
  \begin{tabularx}{\textwidth}{@{}>{\bfseries\RaggedRight\arraybackslash}p{0.18\textwidth}YY@{}}
    \toprule
      & MATH & SciBench \\
    \midrule
    Seeds and arms
      & Seeds 1--5; paired benign and hacked runs
      & Seeds 1--5; paired benign and hacked runs \\
    Benign launch and artifacts
      & Amended cap 60; seeds 2 and 3 retain legacy 150-step artifacts.
        Collapse onset is at steps 37--39. Post-reference windows per seed are
        52, 141, 139, 52, and 52; 135 are gate-eligible after pooling.
      & Nominal launch horizon 150. Collapse occurs near step 41.
        Post-reference windows per seed are 39, 37, 37, 35, and 32; 149 are
        gate-eligible after pooling. \\
    Hacked artifacts
      & Horizon 100; all five seeds reach step 100.
      & Horizon 100; four seeds reach step 100 and seed 3 ends at step 98. \\
    \bottomrule
  \end{tabularx}
\end{center}

\paragraph{GRPO and optimizer.}
We use GRPO (\texttt{algorithm.adv\_estimator=grpo}). Each optimizer step
samples 64 prompts with 16 policy rollouts per prompt, yielding 1,024 responses
per step. The PPO mini-batch size is 32 and the per-GPU micro-batch size is 4.
The full-parameter actor uses AdamW with learning rate \(3\times10^{-6}\),
betas \((0.9,0.999)\), weight decay 0.01, a constant schedule, 15 warmup steps,
and gradient-norm clipping at 1.0. Gradient checkpointing is enabled. FSDP
parameter and optimizer offload are enabled for the actor, and parameter
offload is enabled for the reference policy. The actor KL loss is enabled with
coefficient \(10^{-3}\) and the low-variance KL form; KL is not added to the
reward and the entropy coefficient is zero.

\paragraph{Policy and judge sampling.}
Training and validation policy sampling both use temperature 0.7,
\(\texttt{top\_p}=0.8\), \(\texttt{top\_k}=20\), and sampling enabled. The
maximum response length is 2,048 tokens; maximum prompt length is 1,024 tokens
for MATH and 1,536 for SciBench. The policy rollout engine uses a 4,096-token
model context and at most 16,384 batched tokens. MATH applies the same
training-only response-length penalty in both arms: it begins at length 5,500,
is hard at 8,000, and has coefficient 1.0. SciBench has no length penalty.

The fixed \texttt{Qwen/Qwen3.5-27B} judge is served separately with vLLM on two
GPUs using tensor parallelism 2, bfloat16, and a 32,768-token server context.
Judge calls use temperature 1.0, at most 7,700 generated tokens, and thinking
disabled. An incomplete seven-criterion parse is retried up to three times.
The task-specific gold signal is computed independently and is excluded from
both the reward and verdict-only monitoring.

\paragraph{Checkpoint and analysis-window policy.}
There is deliberately no best-checkpoint selection. Checkpoints are written
every 25 optimizer steps, validation runs every 20 steps, and
\texttt{resume\_mode=disable}. A monitor alarm never halts a hacked run; the
configured horizon is the planned stopping rule, and any incidental early
termination is retained and reported. The analysis selects a trajectory window
rather than a checkpoint. Its fixed reference epoch is steps 0--8, and
the clean window ends immediately before coherence collapse, defined for this
analysis as the first post-reference step whose mean log-probability per token
is below the run's reference mean by 0.5. Executed horizons are determined by
the frozen launcher configurations and recorded rollout artifacts.

\paragraph{Compute and runtime accounting.}
One run uses one node with six H100 GPUs: four for policy training and two for
the judge server. The job timeout is 86,400 seconds and at most three runs are
concurrent, for a peak allocation of 18 H100s. The repository artifacts do not
record per-run start and end timestamps, so we do not report measured
H100-hours. For scale, one 24-hour attempt for each of the 20 cells would be a
configured ceiling of \(20\times6\times24=2{,}880\) H100-hours, excluding
pilots and retries; actual consumption is lower for trajectories that end
before the timeout.

\end{document}